\newcommand{\bee}{\textsf{BEE}}
\newcommand{\SB}{\textsf{sb}}
\newcommand{\set}[1]{\left\{
      \begin{array}{l}#1\end{array}
      \right\}}
\newcommand{\sset}[2]{\left\{~#1  \left|
      \begin{array}{l}#2\end{array}
    \right.     \right\}}
\newcommand\tuple[1]{\langle #1 \rangle}
\newcommand{\GG}{{\cal G}}
\newcommand{\RR}{{\cal R}}
\newcommand{\etal}{{\it et al.\ }}
\newcommand\true{\mathit{true}}
\newcommand\false{\mathit{false}}
\title{Breaking Symmetries in Graph Search \\with Canonizing Sets}
\author{Avraham Itzhakov \and Michael Codish}
\institute{
  Department of Computer Science,
  Ben-Gurion University of the Negev, Israel}
\begin{document}

\maketitle

\begin{abstract}

  There are many complex combinatorial problems which involve
  searching for an undirected graph satisfying given constraints. 
  Such problems are often highly challenging because of the
  large number of isomorphic representations of their solutions.
  This paper introduces effective and compact, complete symmetry
  breaking constraints for small graph search.
  Enumerating with these symmetry breaks generates all and
  only non-isomorphic solutions.
  For small search problems, with up to $10$ vertices, we compute
  instance independent symmetry breaking constraints.
  For small search problems with a larger number of vertices we
  demonstrate the computation of instance dependent constraints which
  are complete.
  We illustrate the application of complete symmetry breaking
  constraints to extend two known sequences from the OEIS
  related to graph enumeration.
  We also demonstrate the application of a generalization of our
  approach to fully-interchangeable matrix search problems.
\end{abstract}

\section{Introduction}

Graph search problems are about the search for a graph which satisfies
a given set of constraints, or to determine that no such graph
exists. Often graph search problems are about the search for the
set of all graphs, modulo graph isomorphism, that satisfy the
given constraints. 
Graph search problems are typically invariant under graph
isomorphism. Namely, if $G$ is a solution, then any graph
obtained by permuting the vertices of $G$ is also a solution.
When seeking solutions, the size of the search space is significantly
reduced if symmetries are eliminated. The search space can be explored
more efficiently when avoiding paths that lead to symmetric solutions
and avoiding also those that lead to symmetric non-solutions.

One common approach to eliminate symmetries is to introduce symmetry
breaking
constraints~\cite{Puget93,Crawford96,Shlyakhter2007,Walsh06}
which rule out isomorphic solutions thus reducing the size of the
search space while preserving the set of solutions.
Ideally, a symmetry breaking constraint is satisfied by a single
member of each equivalence class of solutions, thus drastically
restricting the search space. However, computing such symmetry
breaking constraints is, most likely, intractable in
general~\cite{Crawford96}.
In practice, symmetry breaking constraints typically rule out some,
but not all of the symmetries in the search and, as noted in the
survey by Walsh~\cite{Walsh12}, often a few simple constraints rule
out most of the symmetries.

Shlyakhter~\cite{Shlyakhter2007} notes that the core difficulty is to
identify a symmetry-breaking predicate which is both \emph{effective}
(rules out a large portion of the search space) and \emph{compact} (so
that checking the additional constraints does not slow down the
search).
In~\cite{CodishMPS14}, Codish \etal\ introduce a symmetry breaking
constraint for graph search problems. Their constraint is compact,
with size polynomial in the number of graph vertices, and shown to be
effective but it does not eliminate all of the symmetries in the
search.

There is a large body of research that concerns identifying symmetries
in a given graph. In this setting, finding symmetries is about
detecting graph automorphisms. A typical application is in the context
of SAT solving as described for example 
in~\cite{AloulSM06,Aloul10,KatebiSM10,KatebiSM12,KatebiSM12b}.
In this paper the setting is  different as the graph is not given
but rather is the subject of the search problem.

In this paper we adopt the following terminology.  Symmetry breaking
constraints that break all of the symmetries, or more precisely, that
are satisfied by exactly one solution in each symmetry class, are
called \emph{complete}. Symmetry breaking constraints which are sound
i.e, satisfied by at least one solution in each symmetry class, but
not complete are called \emph{partial}. If a symmetry breaking
constraint is satisfied exactly by the canonical representatives of the
symmetry classes, it is called \emph{canonizing}. Note that canonizing
symmetry breaking constraints are also complete.

Computing all solutions to a graph search problem with partial
symmetry breaking constraints is a two step process. First one
generates the set $S$ of solutions to the constraints, and then one
applies a graph isomorphism tool, such as \texttt{nauty}~\cite{nauty}
to reduce $S$ modulo isomorphism. Often, the number of solutions in
the first step is very large and then this method may fail to generate
the initial set of solutions.

This paper presents a methodology to compute small sets of static canonizing
symmetry breaking constraints for ``small'' graph search problems.
Consider for example the search for a graph with $n=10$ vertices. The search space
consists of $2^{45}$ graphs, whereas, there are only 12\,005\,168 such
graphs modulo isomorphism (see sequence \texttt{A000088} of the
OEIS~\cite{oeis}). In theory, to break all symmetries one could
construct a symmetry breaking constraint that considers all
$10!=3{,}628{,}800$ permutations of the vertices.
We will show how to construct a compact canonizing symmetry
breaking constraint for graph search problems on 10 vertices using
only 7853 permutations. 

Our approach can be applied, in the terminology of~\cite{Aloul2011},
both in an ``instance independent'' fashion and ``instance
dependent''.  When ``instance independent'', it generates canonizing
symmetry breaking constraints for any graph search problem and in this
setting it applies to break all symmetries in graph search problems on
up to 10 vertices. When ``instance dependent'', it generates canonizing
symmetry breaking constraints which apply to break symmetries in
larger graphs which are solutions of a given graph search
problem. These symmetry breaking constraints are typically smaller and
easier to compute than the corresponding ``instance independent''
constraints.
%
We illustrate the application of complete symmetry breaking
constraints, both instance independent and instance dependent, to
extend two known sequences from the OEIS related to graph
enumeration. 

We also observe that the derived symmetry constraints are ``solver
independent''.  They can be applied in conjunction with any constraint
solver to restrict the search to canonical solutions  of a given
search problem. 

The rest of this paper is structured as follows.
Section~\ref{sec:motivating} provides a motivating example. 
Section~\ref{sec:prelim} presents preliminary definitions and
notation.
Section~\ref{sec:canonizing} describes how we compute complete and
canonizing symmetry breaking constraints. First, in
Section~\ref{subsec:indep}, for instance independent graph search
problems. Then, in Section~\ref{sec:dep_canonizing}, for a given graph
search problem.
Section~\ref{sec:matrixmodels} demonstrates a generalization of our
approach to matrix search problems and illustrates its impact when
solving  the Equi-distant Frequency
Permutation Array problem ({EFPA}).
Section~\ref{sec:conclusion} concludes.

\section{A Motivating Example}
\label{sec:motivating}

A classic example of a graph search problem relates to the search for
Ramsey graphs~\cite{Rad}. The graph $R(s,t;n)$ is a simple graph with
$n$ vertices, no clique of size $s$, and no independent set of size
$t$.  Figure~\ref{fig:r33graph} illustrates a $R(3,3;5)$ graph. The
graph contains no 3-clique and no 3-independent set. A Ramsey
$(s,t)$-graph is a $R(s,t;n)$ graph for some $n$. The set of all
$R(s,t;n)$ graphs, modulo graph isomorphism, is denoted $\RR(s,t;n)$.
Ramsey Theory tells us that there are only
\begin{figure}  \centering
  \resizebox{.25\linewidth}{!}{
$$
\newcommand{\xyo}[1]{*+<0.8em>[o][F-]{#1}}
\xymatrix@!=1mm{
  &&\xyo{1}\ar@{-}[rrdd]\ar@{-}[lldd]\ar@{--}[rdddd]\ar@{--}[ldddd]&& \\
  \\
  \xyo{5}\ar@{-}[rdd]\ar@{--}[rrrr]\ar@{--}[rrrdd]&&&& 
  \xyo{2}\ar@{-}[ldd]\ar@{--}[llldd]\\
  \\
  &\xyo{4}\ar@{-}[rr]&&\xyo{3}&\\
}
$$
}
  \caption{A $R(3,3;5)$ Ramsey graph: edges denoted by solid lines and
    non-edges by dashed.}
  \label{fig:r33graph}
\end{figure}
 a finite number of Ramsey
$(s,t)$-graphs for each $s$ and $t$, but finding all such graphs, or
even determining the largest $n$ for which they exist, is a famously
difficult problem.
It is unknown, for example, if there exists a $R(5,5;43)$ graph and
the set $\RR(4,5;24)$ has yet to be been fully determined, although
350{,}904 non-isomorphic graphs are known to belong to $\RR(4,5;24)$.

Solving the graph search problem to find all $R(3,4;8)$ graphs without
any symmetry breaking constraint results in a set of 17{,}640
graphs. Then, applying \texttt{nauty}~\cite{nauty} to these solutions
identifies precisely 3 solutions modulo graph isomorphism.
Introducing a partial symmetry breaking constraint as described in
\cite{DBLP:conf/ijcai/CodishMPS13} in the search to enumerate all
$R(3,4;8)$ graphs computes only 11 graphs in a fraction of the time
required to compute the full set of solutions. These too can then be
reduced applying \texttt{nauty} to obtain the 3 canonical solutions.
Application of a complete symmetry breaking constraint as proposed in
this paper results in the exact set of 3 non-isomorphic solutions.

\section{Preliminaries}
\label{sec:prelim}
Throughout this paper we consider finite and simple graphs (undirected
with no  self loops). The set of simple graphs on $n$ nodes is
denoted $\GG_n$.  We assume that the vertex set of a graph,
$G = (V,E)$, is $V=\{1,\ldots,n\}$ and represent $G$ by its $n\times
n$ adjacency matrix $A$ defined by
\[A_{i,j}=    \begin{cases}1 & \mbox{if } (i,j) \in E\\
                                     0          & \mbox{otherwise}
                       \end{cases}
\]

An $n$-vertices graph search problem is a predicate $\varphi$ on an
$n\times n$ matrix $A$ of Boolean variables $A_{i,j}$; and a solution
to a graph search problem $\varphi$ is a satisfying assignment (to the
variables in $A$) of the conjunction $\varphi(A) \wedge adj^n(A)$ where
$adj^n(A)$ states that $A$ is an $n\times n$ adjacency matrix:
\begin{equation}
\label{constraint:simple}
adj^n(A) = \underbrace{\bigwedge_{1\leq i\leq n} 
                \hspace{-2mm}(\neg A_{i,i})}_{(a)}~~\wedge
          \underbrace{\bigwedge_{1\leq i<j\leq n} 
                \hspace{-4mm}(A_{i,j}\leftrightarrow A_{j,i})}_{(b)}
\end{equation}

In Constraint~(\ref{constraint:simple}), the left part (a) states that
there are no self loops and  the right part (b) states that the edges
are undirected.
The set of solutions of a graph search problem is denoted
$sol(\varphi)$ and when we wish to make the variables explicit we
write $sol(\varphi(A))$. The set $sol(\varphi)$ is typically viewed as
a set of graphs. Note that $sol(\true)=\GG_n$.
The following presents two examples of graph search problems which we
will refer to in rest of the paper.


\begin{example}\label{ex:ramsey}\em
  The Ramsey graph $R(s,t;n)$ is a simple graph with $n$ vertices, no
  clique of size $s$, and no independent set of size $t$. The set of
  all $R(s,t;n)$ graphs, modulo graph isomorphism, is denoted
  $\RR(s,t;n)$.
  The search for a Ramsey graph is a graph search problem where we
  take the following $\varphi_{R(s,t;n)}$ as the predicate $\varphi$. Here we denote by
  $\wp_s[n]$ (respectively $\wp_t[n]$) the set of subsets of size $s$
  (respectively $t$) of $\set{1,\ldots,n}$. The left conjunct (a)
  states that there is no clique of size $s$ in the graph, and the
  right conjunct (b) that there is no independent set of size $t$.
\begin{equation}
  \label{eq:ramsey}
  \varphi_{(s,t;n)}(A) = \hspace{-2mm} 
                \underbrace{\bigwedge_{I\in \wp_s[n]}\hspace{-1mm}
                \bigvee \sset{\neg A_{i,j}}{i,j \in I, \\i<j}}_{(a)} ~~\land
                \underbrace{\bigwedge_{I\in \wp_t[n]}\hspace{-1mm}
                \bigvee \sset{A_{i,j}}{i,j \in I, \\i<j} }_{(b)}
\end{equation}
\end{example}

\begin{example}\label{ex:claw}\em
  A graph is claw-free if it does not contain the complete bipartite
  graph $K_{1,3}$ (sometimes called a ``claw'') as a subgraph. 
  The claw free graph search problem is formalized by taking the
  following $\varphi_{\mathit{cf}(n)}$ as the predicate $\varphi$. Each clause in the
  conjunction expresses for $i,j,k,\ell$ that there is no subgraph
  $K_{1,3}$ between $\{i\}$ and $\{j,k,\ell\}$.
\begin{equation}
  \label{eq:claw}
  \varphi_{\mathit{cf}(n)}(A) = 
    \bigwedge \sset{      \begin{array}{l}
                            \neg A_{i,j} \lor \neg A_{i,k} \lor \neg A_{i,\ell}  \\ 
                             \lor A_{j,k} \lor A_{k,\ell} \lor \lor A_{j,\ell}
                          \end{array}
                   }
                   {1\leq i\leq n,~ i\neq j,~i\neq k,\\
                    i\neq\ell,~1\leq j<k<\ell\leq n} 
\end{equation}
\end{example}
\medskip

The set of permutations $\pi:\{1,\ldots,n\}\rightarrow\{1,\ldots,n\}$
is denoted $S_n$. For $G = (V,E) \in \mathcal{G}_n$ and $\pi \in S_n$,
we define $\pi(G)=\{V, \{(\pi(u),\pi(v))| (u,v)\in E)\}$.
Permutations act on adjacency matrices in the natural way: If $A$ is
the adjacency matrix of a graph $G$, then $\pi(A)$ is the adjacency
matrix of $\pi(G)$ obtained by simultaneously permuting with $\pi$
the rows and columns of $A$.  
We adopt the tuple notation $[\pi(1),\ldots,\pi(n)]$ for a permutation
$\pi:\{1, \ldots, n\} \rightarrow \{1, \ldots, n\}$.

Two graphs $G_1,G_2\in \mathcal{G}_n$ are isomorphic, denoted
$G_1\approx G_2$, if there exists a permutation $\pi\in S_n$ such that
$G_1 = \pi(G_2)$. Sometimes we write $G_1\approx_\pi G_2$ to emphasize
that $\pi$ is the permutation such that $G_1=\pi(G_2)$.
For sets of graphs $H_1, H_2$, we say that $H_1\approx H_2$ if for
every $G_1\in H_1$ (likewise in $H_2$) there exists $G_2\in H_2$
(likewise in $H_1$) such that $G_1\approx G_2$.

We consider an ordering on graphs, defined viewing their adjacency
matrices as strings. Because adjacency matrices are symmetric with
zeroes on the diagonal, it suffices to focus on the upper triangle
parts of the matrices \cite{Cameron1985}.
\begin{definition}[ordering graphs]\label{def:order}
  Let $G_1,G_2 \in \mathcal{G}_n$ and let $s_1, s_2$ be the strings
  obtained by concatenating the rows of the upper triangular parts of
  their corresponding adjacency matrices $A_1, A_2$ respectively.
  Then, $G_1 \preceq G_2$ if and only if $s_1\preceq_{lex} s_2$. We
  also write  $A_1 \preceq A_2$.
\end{definition}

One way to define the canonical representation of a graph is to take 
the smallest graph in the lexicographic order (i.e \textsc{LexLeader}) in each
equivalence class of isomorphic graphs \cite{Read78}. In this paper
we follow this definition for canonicity.  

\begin{definition}[canonicity]\label{def:canonizing}
  Let $G\in\mathcal{G}_n$ be a graph, $\Pi\subseteq S_n$, and denote
  the predicate $\min_\Pi(G) = \bigwedge\sset{G \preceq
  \pi(G)}{\pi\in \Pi}$. 
  We say that $G$ is canonical if $\min_{S_n}(G)$. We say
  that $\Pi$ is canonizing if $\forall_{G\in
    \mathcal{G}_n}.\min_\Pi(G)\leftrightarrow \min_{S_n}(G)$.
\end{definition}

Observe that in Definitions~\ref{def:order} and~\ref{def:canonizing}, the
order is defined on given graphs.  Often, we consider the same
relation, but between adjacency matrices that contain propositional
variables (representing unknown graphs, as in the case for graph
search problems). Then, the expressions $A_1\preceq A_2$ and
$\min_\Pi(A)$ are viewed as a Boolean constraints on the variables in
the corresponding matrices.

\newcolumntype{C}[1]{@{}>{\rule[0.5\dimexpr-#1+1.2ex]{0pt}{#1}\hfil$}p{#1}<{$\hfil}@{}}

\begin{figure}
  \centering
  \subfigure[Graph $G$]{
        \centering
        $\left[
         \begin{array}{C{5mm}C{5mm}C{5mm}C{5mm}}
             0 & 0 & 0 & 1 \\
             0 & 0 & 1 & 0 \\
             0 & 1 & 0 & 1 \\
             1 & 0 & 1 & 0
          \end{array}\right]$
        \label{gANDp1}}
\quad
  \subfigure[Graph $\pi_1(G)$]{
        \centering
        $\left[
         \begin{array}{C{5mm}C{5mm}C{5mm}C{5mm}}
             0 & 0 & 1 & 0 \\
             0 & 0 & 0 & 1 \\
             1 & 0 & 0 & 1 \\
             0 & 1 & 1 & 0
          \end{array}\right]$
        \label{gANDp2}}
\quad
  \subfigure[Graph $\pi_2(G)$]{
        \centering
        $\left[
         \begin{array}{C{5mm}C{5mm}C{5mm}C{5mm}}
             0 & 0 & 0 & 1 \\
             0 & 0 & 1 & 1 \\
             0 & 1 & 0 & 0 \\
             1 & 1 & 0 & 0
          \end{array}\right]$
        \label{gANDp3}}
\quad
  \subfigure[Graph $\pi_3(G)$]{
        \centering
        $\left[
         \begin{array}{C{5mm}C{5mm}C{5mm}C{5mm}}
             0 & 0 & 1 & 0 \\
             0 & 0 & 0 & 1 \\
             1 & 0 & 0 & 1 \\
             0 & 1 & 1 & 0
          \end{array}\right]$
        \label{gANDp4}}

      \caption{A graph and its isomorphic representations according to: $\pi_1=[2,1,3,4]$,
        $\pi_2=[1,3,2,4]$, and $\pi_3=[1,2,4,3]$.}
  \label{fig:gANDp}
\end{figure}

\begin{example}\label{ex:canSet}\em
  It turns out that $\Pi= \set{[2,1,3,4], [1,3,2,4], [1,2,4,3]}$ is
  canonizing for $\GG_4$. Namely, with only three permutations we
  express the information present in all $4!=24$ elements of $S_4$.
  So for instance, the graph $G$ depicted in Figure~\ref{gANDp1} is
  canonical because it is smaller than its three permutations with
  respect to $\Pi$ detailed as Figures~\ref{gANDp2}, \ref{gANDp3},
  and~\ref{gANDp4}.  We come back to elaborate on why $\Pi$ is
  canonizing in Example~\ref{ex:minPi}.
\end{example}

\begin{definition}[symmetry break]
  Let $\varphi(A)$ be a $n$-vertices graph search problem and
  $\sigma(A)$ a propositional formula on the variables in $A$. We say
  that $\sigma$ is a symmetry break for $\varphi$ if $sol(\varphi(A))
  \approx sol(\varphi(A)\wedge\sigma(A))$.  If the graphs of
  $sol(\varphi(A)\wedge\sigma(A))$ are mutually non-isomorphic then we
  say that $\sigma$ is complete. Otherwise we say that $\sigma$ is
  partial.  If the graphs of $sol(\varphi(A)\wedge\sigma(A))$ are
  canonical then we say that $\sigma$ is canonizing.
\end{definition}

\begin{lemma}\label{lemma:sb}
  Let $\Pi$ be a canonizing set of permutations for graphs of size
  $n$. Then $\min_\Pi$ is a canonizing symmetry break for
  any graph search problem on $n$ vertices.
\end{lemma}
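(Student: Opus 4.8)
The plan is to verify the two properties that, together, make $\min_\Pi$ a canonizing symmetry break according to the definition: first that $\min_\Pi$ is a symmetry break for $\varphi$, i.e. $sol(\varphi(A)) \approx sol(\varphi(A)\wedge\min_\Pi(A))$, and second that every solution of $\varphi(A)\wedge\min_\Pi(A)$ is canonical. I would dispatch the second property immediately from the hypothesis: if $G \in sol(\varphi(A)\wedge\min_\Pi(A))$ then $\min_\Pi(G)$ holds, and since $\Pi$ is canonizing, Definition~\ref{def:canonizing} yields $\min_{S_n}(G)$, which is exactly the statement that $G$ is canonical. So the canonicity half is a one-line appeal to the defining equivalence of a canonizing set.

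For the symmetry-break property I would establish the two inclusions underlying $\approx$. One direction is trivial: since $sol(\varphi(A)\wedge\min_\Pi(A)) \subseteq sol(\varphi(A))$, every solution of the constrained problem already lies in $sol(\varphi(A))$ and is isomorphic to itself. The content is in the converse. Here I would fix an arbitrary $G \in sol(\varphi)$ and let $G'$ be the $\preceq$-least element of the isomorphism class of $G$. This $G'$ is well defined because $\preceq_{lex}$ totally orders the finitely many adjacency matrices in the class, so $G'$ is the unique LexLeader and satisfies $\min_{S_n}(G')$.

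I would then argue that $G' \in sol(\varphi(A)\wedge\min_\Pi(A))$. Because $G' \approx G$ and graph search problems are invariant under isomorphism (a solution remains a solution under any permutation of its vertices), $G'$ is again a solution of $\varphi$; and because $\min_{S_n}(G')$ holds while $\Pi$ is canonizing, the equivalence of Definition~\ref{def:canonizing} gives $\min_\Pi(G')$. Thus $G'$ satisfies both conjuncts. Since $G' \approx G$, this exhibits for the arbitrary $G$ an isomorphic solution of the constrained problem, completing the second inclusion and hence $sol(\varphi) \approx sol(\varphi\wedge\min_\Pi)$.

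The only genuinely load-bearing step is the use of isomorphism invariance of $\varphi$ when passing from $G$ to its canonical representative $G'$: without it, the LexLeader of a solution's class need not itself be a solution, and the symmetry break could drop an entire isomorphism class, so I would flag this as the assumption on which everything turns. Beyond that, the argument is bookkeeping, and I expect the clean observation to emphasize is that the whole proof factors through the single equivalence $\min_\Pi \leftrightarrow \min_{S_n}$ furnished by the hypothesis, applied once in each direction — to $G'$ for completeness of the symmetry break, and to an arbitrary constrained solution for canonicity.
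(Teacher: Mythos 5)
Your proof is correct and follows essentially the same route as the paper's: both arguments reduce everything to the single equivalence $\min_\Pi(G)\leftrightarrow\min_{S_n}(G)$ furnished by the hypothesis, together with the fact that exactly the \textsc{LexLeader} of each isomorphism class satisfies $\min_{S_n}$. You are merely more explicit than the paper in separating soundness from canonicity and in flagging the isomorphism-invariance of $\varphi$ (which the paper assumes tacitly when it takes the canonical member of $iso(A)$ to be a solution), but this is elaboration rather than a different argument.
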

\begin{proof}
  Let $A$ be a solution to a graph search problem on $n$ vertices and let $\Pi$
  be a canonizing set for graphs with $n$ vertices. In order to prove that $min_{\Pi}$ is
  a canonizing symmetry break it is sufficient to show that only the canonical member in $iso(A)=
  \{ \pi(A) | \forall \pi \in S_n \}$ satisfies $min_{\Pi}$. $\Pi$ is a canonizing
  set thus by definition $\forall G\in \mathcal{G}_{n} : min_{\Pi}(G) \leftrightarrow min_{S_{n}}(G)$. 
  Since only the canonical graph in $iso(A)$ satisfies $min_{S_{n}}$ it follows
  that it is the only one which satisfies $min_{\Pi}$.

\end{proof}

\begin{corollary}
  $\min_{S_n}(A)$ is a canonizing symmetry break for
  any graph search problem on $n$ vertices.
\end{corollary}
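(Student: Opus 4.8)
The plan is to recognize this statement as the immediate boundary case of Lemma~\ref{lemma:sb}. That lemma already establishes that $\min_\Pi$ is a canonizing symmetry break for any graph search problem on $n$ vertices whenever $\Pi$ is a canonizing set of permutations for graphs of size $n$. Consequently, the only work left is to verify that the full symmetric group $S_n$ is itself a canonizing set, and then to instantiate $\Pi := S_n$ in the lemma.

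First I would check the canonizing condition of Definition~\ref{def:canonizing} for the choice $\Pi = S_n$. The definition demands $\forall_{G\in\mathcal{G}_n}.\min_\Pi(G)\leftrightarrow\min_{S_n}(G)$, and substituting $\Pi = S_n$ reduces this to $\min_{S_n}(G)\leftrightarrow\min_{S_n}(G)$, which holds trivially as a tautology for every $G\in\mathcal{G}_n$. Hence $S_n$ satisfies the hypothesis of being a canonizing set.

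Second, I would simply apply Lemma~\ref{lemma:sb} with this $\Pi$. The conclusion of the lemma then reads verbatim as the claim, namely that $\min_{S_n}$ is a canonizing symmetry break for any graph search problem on $n$ vertices, which completes the argument.

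I expect no genuine obstacle here: the corollary is precisely the case of Lemma~\ref{lemma:sb} in which no reduction of the permutation set is performed, so its content is already subsumed by the lemma. The only point worth making explicit is the trivial tautology that certifies $S_n$ as canonizing, ensuring that the hypothesis of Lemma~\ref{lemma:sb} is formally met before the lemma is invoked.
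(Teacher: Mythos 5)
Your proposal is correct and matches the paper's intent exactly: the corollary is stated without proof precisely because it is the instantiation $\Pi := S_n$ of Lemma~\ref{lemma:sb}, with the hypothesis discharged by the tautology $\min_{S_n}(G)\leftrightarrow\min_{S_n}(G)$. Nothing further is needed.
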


\begin{example}\label{ex:minPi}\em 
Consider the canonizing set $\Pi$ from Example~\ref{ex:canSet}
and the following $4\times 4$ adjacency matrix A:\\
\begin{tabular}{ll}
\hspace{2cm}$ A=
\left[
  \begin{array}{C{5mm}C{5mm}C{5mm}C{5mm}}
     0 & a & b & c \\
     a & 0 & d & e \\
     b & d & 0 & f \\
     c & e & f & 0
  \end{array}\right]
$
   &\hspace{2cm}
$\Pi= \set{ ~[2,1,3,4],\\ ~ [1,3,2,4],\\ ~ [1,2,4,3]}$\\
\end{tabular}\\
Then, by Definition~\ref{def:canonizing} and Lemma~\ref{lemma:sb},
\[\mbox{$\min_\Pi(A)$}= (\mathit{a b  c d e f} \preceq_{lex} \mathit{a d e b c f})~\wedge~
              (\mathit{a b c d e f} \preceq_{lex}  \mathit{b a c d f e})~\wedge~
              (\mathit{a b c d e f} \preceq_{lex}  \mathit{a c b d e f})
\]
and this simplifies using properties of lexicographic
orderings to:
\[\mbox{$\min_\Pi(A)$}= (b c \preceq_{lex} d e) ~\wedge~
                       (a e \preceq_{lex} \mathit{b f}) ~\wedge~
                       (b d \preceq_{lex} c e)
\]
To verify that $\Pi$ is indeed canonizing one should consider each of
the permutations in $\pi\in S_4\setminus\Pi$ and prove that
$\min_\Pi(A)\Rightarrow A\preceq \pi(A)$ where $A$ is the
variable matrix detailed above.
For example, when $\pi=[2,1,4,3]$, $A\preceq \pi(A)$
means $abcdef \preceq_{lex} aedcbf$ which simplifies to
$bc\preceq_{lex}ed$ and we need to show that $(b c \preceq_{lex} d e)
~\wedge~ (a e \preceq_{lex} \mathit{b f}) ~\wedge~ (b d \preceq_{lex}
c e) \Rightarrow bc\preceq_{lex}ed$. This is not difficult to check.

\end{example}

Clearly, for any set of permutations $\Pi\subset S_n$ the predicate
$\min_\Pi$ is a partial symmetry break for graph search problems.
In~\cite{DBLP:conf/ijcai/CodishMPS13}, Codish \etal introduce the
following symmetry break for graph search problems where $A_i$ denotes
the $i^{th}$ row of the adjacency matrix $A$ and $\preceq_{\{i,j\}}$
denotes the lexicographic comparison on strings after removing their
$i^{th}$ and $j^{th}$ elements.
\begin{definition}[lexicographic symmetry break, \cite{DBLP:conf/ijcai/CodishMPS13}]
\label{def:SBlexStar}
  Let $A$ be an $n\times n$ adjacency matrix. Then,
  \[\SB^*_\ell(A) = \bigwedge_{1\leq i<j\leq n}  A[i]\preceq_{\{i,j\}} A[j]\]
\end{definition}
It is not difficult to observe that $\SB^*_\ell$ is equivalent to the
predicate $\min_\Pi$ where $\Pi$ is the set of permutations that swap
a single pair $(i,j)$ with $1\leq i<j\leq n$. 

\paragraph{\bf The experimental setting.~} In this paper all
computations are performed using the Glucose 4.0 SAT
solver~\cite{Glucose}. Encodings to CNF are obtained using the finite-domain
constraint compiler \bee~\cite{jair2013}. \bee\ facilitates
applications to find a single (first) solution, or to find all
solutions for a constraint, modulo a specified set of variables.
When solving for all solutions, \bee\ iterates with the
SAT solver, adding so called \emph{blocking clauses} each time another
solution is found. This technique, originally due to
McMillan~\cite{McMillan2002}, is simplistic but suffices for our
purposes.
All computations were performed on a cluster with a total of $228$
Intel E8400 cores clocked at 2 GHz each, able to run a total of $456$
parallel threads. Each of the cores in the cluster has computational
power comparable to a core on a standard desktop computer.  Each SAT
instance is run on a single thread.

\section{Canonizing Symmetry Breaks}
\label{sec:canonizing}

The observation made in Example~\ref{ex:canSet}: that a canonizing set
for graphs with $n$ vertices can be much smaller than $n!$, motivates
us to seek ``small'' canonizing sets that might be applied to
introduce canonizing symmetry breaking constraints for graph search
problems.
First, we describe the application of this approach to compute  
relatively small \emph{instance independent} canonizing sets, which
induce general purpose symmetry breaks that can be used for any graph
search problem. We compute these sets for graphs with $n \leq 10$ vertices.
We illustrate their application when breaking all symmetries in the search
for Ramsey and claw-free graphs. 

Second, we apply our methods to compute \emph{instance dependent}
canonizing sets which are computed for a given graph search
problem. Namely, these sets promise that only non-isomorphic solutions
will be generated when enumerating all solutions for the given graph
search problem that satisfy its corresponding canonizing symmetry
breaks. However, these sets are not necessarily canonizing for other
graph search problem. We show that such canonizing sets can be
computed for larger graphs (compare to instance independent canonizing
sets) and their usage is illustrated to enumerate all non-isomorphic
highly irregular graphs up to $20$ vertices.

\subsection*{Computing Canonizing Sets}

To compute a canonizing set of permutations for graph search problem $\varphi$
on $n$ vertices we start with some initial set of permutations $\Pi$
(for simplicity, assume that $\Pi=\emptyset$). Then, incrementally
apply the step specified in lines \ref{line:while}--\ref{line:bla} of
Algorithm~\ref{CanSetAlgorithm}, as long as the stated condition
holds.
\begin{algorithm}
\begin{algorithmic}[1]
\Procedure{Compute-Canonizing-Set($\Pi$, $\varphi$)}{}
\While{ $\exists G\in sol(\varphi)$ $\exists \pi\in S_n$ such that
        $\min_\Pi(G)$  and $\pi(G)\prec G$}\label{line:while}
\State  $\Pi=\Pi\cup\{\pi\}$\label{line:bla}
\EndWhile
\State \textbf{return} $\Pi$
\EndProcedure
\end{algorithmic}
\caption{Compute Canonizing Set}
\label{CanSetAlgorithm}
\end{algorithm} 

\begin{lemma}
  Algorithm~\ref{CanSetAlgorithm} terminates and returns a canonizing
  set $\Pi$ for the graph search problem $\varphi$.
\end{lemma}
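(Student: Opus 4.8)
The plan is to split the claim into two independent parts: partial correctness, namely that the returned $\Pi$ is canonizing for $\varphi$ in the sense that $\forall G\in sol(\varphi).\,\min_\Pi(G)\leftrightarrow\min_{S_n}(G)$, and termination. I would handle each separately, since the loop guard directly encodes the correctness condition while termination needs a separate measure argument.

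For partial correctness, I would reason about the state at loop exit. The loop stops precisely when there is no $G\in sol(\varphi)$ and no $\pi\in S_n$ with $\min_\Pi(G)$ and $\pi(G)\prec G$. Equivalently, for every solution $G$ satisfying $\min_\Pi(G)$ we have $G\preceq\pi(G)$ for all $\pi\in S_n$, i.e.\ $\min_{S_n}(G)$ holds; this establishes the implication $\min_\Pi(G)\Rightarrow\min_{S_n}(G)$ on $sol(\varphi)$. The converse implication $\min_{S_n}(G)\Rightarrow\min_\Pi(G)$ holds trivially and unconditionally, because $\Pi\subseteq S_n$ makes the conjunction defining $\min_\Pi$ a sub-conjunction of the one defining $\min_{S_n}$. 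Together these two directions give the required equivalence on solutions.

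For termination, the key observation is monotonicity: for any $\pi$ we have $\min_{\Pi\cup\{\pi\}}(G)=\min_\Pi(G)\wedge(G\preceq\pi(G))$, so extending $\Pi$ can only shrink, never enlarge, the set $M(\Pi)=\{\,G\in sol(\varphi) : \min_\Pi(G)\,\}$. I would take $|M(\Pi)|$ as a termination measure. At each iteration the algorithm selects a witness $G$ with $\min_\Pi(G)$ and $\pi(G)\prec G$, hence $G\not\preceq\pi(G)$, so after adding $\pi$ we obtain $\neg\min_{\Pi\cup\{\pi\}}(G)$; thus $G$ leaves $M$ while, by monotonicity, no graph enters, and $|M(\Pi)|$ strictly decreases by at least one. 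Since $M(\Pi)\subseteq sol(\varphi)\subseteq\mathcal{G}_n$ is finite, the loop executes at most $|sol(\varphi)|$ times and must terminate.

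The main obstacle is the termination argument, and within it the point that requires care is that enlarging $\Pi$ never promotes a previously non-minimal solution into $M$; this is exactly the monotonicity remark above, and it is what guarantees the strict (rather than merely eventual) decrease of the measure. A secondary point worth stating explicitly is that $M(\Pi)$ always retains the lexicographic leader of each isomorphism class represented in $sol(\varphi)$, since such a leader satisfies $\min_{S_n}$ and therefore $\min_\Pi$ for every $\Pi$; hence the process removes only genuinely non-canonical solutions, confirming that the equivalence established at exit is non-vacuous and that the surviving representatives are precisely the canonical ones.
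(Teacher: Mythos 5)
Your proposal is correct. The partial-correctness half coincides with the paper's argument: at loop exit the negation of the guard yields $\min_\Pi(G)\Rightarrow\min_{S_n}(G)$ for every $G\in sol(\varphi)$, and the converse holds because $\min_\Pi$ is a sub-conjunction of $\min_{S_n}$ (the paper leaves this trivial direction implicit). Where you genuinely depart from the paper is termination. The paper's argument is simply that each iteration adds a permutation to $\Pi$ and the number of permutations is bounded by $n!$; for this to be a strict increase one must note that the witness $\pi$ is always new, which follows because $\pi\in\Pi$ together with $\min_\Pi(G)$ would give $G\preceq\pi(G)$, contradicting $\pi(G)\prec G$. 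You instead use the measure $|\{G\in sol(\varphi):\min_\Pi(G)\}|$, showing it strictly decreases via monotonicity of $\min_\Pi$ in $\Pi$ plus the fact that the witness graph is eliminated. Both measures are sound: the paper's gives an iteration bound of $n!$, which also bounds the size of the returned set (the quantity one actually cares about), while yours gives a bound of $|sol(\varphi)|$, which can be much larger (e.g.\ $2^{\binom{n}{2}}$ when $\varphi=\true$) but avoids having to observe that the added permutation is new. Your closing remark that the lexicographic leader of each isomorphism class always survives in $M(\Pi)$ is a correct and useful sanity check, though it is not needed to establish the lemma.
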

 
\begin{proof}
  Each step in the algorithm adds a permutation (at Line
  \ref{line:bla}) and the number of permutations is bound.
  When the algorithm terminates with $\Pi$ then 
  for $G\in sol(\varphi)$, if $\min_\Pi(G)$ holds then there is no $\pi\in
  S_n$ such that $\pi(G)\prec G$. So, $G \preceq \pi(G)$ for all $\pi
  \in S_n$ and therefore $\min_{S_n}(G)$ holds.
\end{proof}

\begin{figure}\small
  \centering
    \begin{eqnarray}
      {perm}^{n}(\pi) &=& 
                  \bigwedge_{1\leq i\leq n} \hspace{-1mm} 1\leq \pi_{i}\leq n 
                  ~~\land~~ 
                  \mathtt{allDifferent}(\pi)
     \label{constraint:permutation}
\\
      {iso}^{n}(A,B,\pi) &=& \hspace{-2mm}
        \bigwedge_{1\leq i,j\leq n}
        \left(\begin{array}{l}
           B_{i,j} \Leftrightarrow ~
               \bigvee_{1\leq i',j'\leq n}
               \left(\pi_{i'} = i \land \pi_{j'} = j  \land A_{i',j'}\right)
        \end{array}\right)
     \label{constraint:iso}
\\[1ex]
     {\mathit{alg_1^n}}(\Pi,\varphi) &=& adj^{n}(A) ~\land~ 
                  adj^{n}(B) ~\land~
                   {perm}^{n}(\pi)~\land~  {iso}^{n}(A,B,\pi)~\land
\label{constraint:step} \\
     & & \wedge~\mbox{$\min_{\Pi}(A)$}   ~\land~ A \succ B ~\land~ \varphi(A)
     \nonumber
\end{eqnarray}

\caption{Constraints for Algorithm~\ref{CanSetAlgorithm} where $A$ and
  $B$ are $n\times n$ Boolean matrices and
  $\pi=\tuple{\pi_1,\ldots,\pi_n}$ is a vector of integer variables
  (with domain $\{1,\ldots,n\}$).}
  \label{fig:cc}
\end{figure}

Drawing on the discussion in~\cite{Luks2004,Babai1983,Crawford96} we
do not expect to find a polynomial time algorithm to compute a
canonical (or any other complete) symmetry breaking constraint for
graph search problems based on Definition~\ref{def:canonizing}. Thus
it is also unlikely to find an efficient implementation of
Algorithm~\ref{CanSetAlgorithm}.
Our implementation of Algorithm~\ref{CanSetAlgorithm} is based on a
SAT encoding. We repeatedly apply a SAT solver to find a counter
example permutation which shows that $\Pi$ is not a canonizing set yet
and add it to $\Pi$, until an UNSAT result is obtained.
In the implementation of the algorithm, care is taken to use a single
invocation of the SAT solver so that the iterated calls to the solver
are incremental. The constraint model used is depicted as Figure~\ref{fig:cc}
%
where $A,B$ denote $n\times n$ matrices of propositional variables and
$\pi$ denotes a length $n$ vector of integer variables. 
Constraint~\ref{constraint:permutation} specifies that the parameter
$\pi$ is a permutation on $\set{1,\ldots,n}$. Each element of the vector
is a value $1\leq \pi_i\leq n$ and the elements are all different.
Constraint~\ref{constraint:iso} specifies that the parameters
$A,B$ represent isomorphic graphs via the parameter $\pi$.
Constraint~\ref{constraint:step} specifies the condition of the while
loop (line~\ref{line:while}) of Algorithm~\ref{CanSetAlgorithm}: $A$
is restricted to be a solution to the given graph search problem $\varphi$, $A$
and $B$ are constrained $B = \pi(A)$ to be isomorphic adjacency
matrices (see Constraint~(\ref{constraint:simple})) via the
permutation $\pi$. The constraint $\min_\Pi(A)$ is imposed and also
$A\succ B$. If ${\mathit{alg_1^n}}(\Pi,\varphi)$ is satisfiable, then
the permutation $\pi$ is determined by the satisfying assignment and
added to $\Pi$ as specified in (line~\ref{line:bla}) of
Algorithm~\ref{CanSetAlgorithm}.

We say that a canonizing set $\Pi$ of permutations is redundant if for
some $\pi\in\Pi$ the set $\Pi\setminus\{\pi\}$ is also canonizing.
Algorithm~\ref{CanSetAlgorithm} may compute a redundant set. For
example, if a permutation added at some point becomes redundant
in view of permutations added later. Algorithm~\ref{ReduceAlgorithm}
iterates on the elements of a canonizing set to remove redundant
permutations.

\begin{algorithm}[H]
\caption{Reduce method}
\label{ReduceAlgorithm}
\begin{algorithmic}[1]
\Procedure{Reduce}{$\Pi$,$\varphi$}
\For{ each $\pi \in \Pi$ } \label{reduce:select}
\If  {$\forall G \in sol(\varphi)$: 
        $\min_{\Pi\setminus \{ \pi \}}(G)$ $\Rightarrow$ $G \preceq \pi(G)$}
\label{reduce:test}
\State $\Pi= \Pi \setminus \{\pi\}$
\EndIf
\EndFor
\State return $\Pi$
\EndProcedure
\end{algorithmic}
\end{algorithm} 

\begin{lemma}
  If $\Pi$ is a canonizing set for the graph search problem $\varphi$,
  then so is $Reduce(\Pi,\varphi)$ computed by
  Algorithm~\ref{ReduceAlgorithm}.
\end{lemma}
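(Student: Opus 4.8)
The plan is to prove the statement by maintaining a loop invariant. Enumerate the elements of the input set as $\pi^{(1)},\ldots,\pi^{(m)}$ in the order the \textsc{For} loop (line~\ref{reduce:select}) visits them, let $\Pi_0=\Pi$ denote the input, and let $\Pi_k$ denote the value of the working set after the iteration that considers $\pi^{(k)}$. I would prove, by induction on $k$, the invariant that $\Pi_k$ is canonizing for $\varphi$; since the procedure returns $\Pi_m$, this yields the claim. The base case $k=0$ is exactly the hypothesis that the input $\Pi$ is canonizing. Recall that ``canonizing for $\varphi$'' means $\forall G\in sol(\varphi):\min_{\Pi_k}(G)\Leftrightarrow \min_{S_n}(G)$, and that the $\Leftarrow$ direction is automatic for any $\Pi_k\subseteq S_n$, so only $\min_{\Pi_k}(G)\Rightarrow\min_{S_n}(G)$ carries content.

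For the inductive step, fix $k$ and write $\pi=\pi^{(k)}$, with $\Pi_{k-1}$ canonizing by the induction hypothesis. If the test on line~\ref{reduce:test} fails then $\Pi_k=\Pi_{k-1}$ and the invariant is inherited unchanged. If the test succeeds then $\Pi_k=\Pi_{k-1}\setminus\{\pi\}$, and I must show this smaller set is still canonizing. Fix $G\in sol(\varphi)$ with $\min_{\Pi_k}(G)$. The test of line~\ref{reduce:test} states precisely that $\min_{\Pi_{k-1}\setminus\{\pi\}}(G)\Rightarrow G\preceq\pi(G)$, so from $\min_{\Pi_k}(G)$ we obtain $G\preceq\pi(G)$. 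Conjoining this with $\min_{\Pi_k}(G)$ gives $G\preceq\tau(G)$ for every $\tau\in\Pi_k\cup\{\pi\}=\Pi_{k-1}$, that is $\min_{\Pi_{k-1}}(G)$. Since $\Pi_{k-1}$ is canonizing, $\min_{\Pi_{k-1}}(G)\Rightarrow\min_{S_n}(G)$, and we conclude $\min_{S_n}(G)$, establishing that $\Pi_k$ is canonizing.

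The point to be careful about is that the membership test on line~\ref{reduce:test} is phrased with respect to the \emph{current} working set rather than the original input, so a direct ``the removed permutation was redundant in $\Pi$'' argument would be too glib once earlier permutations have already been deleted. The invariant formulation sidesteps this: all that the test verifies is that, over $sol(\varphi)$, the constraints $\min_{\Pi_{k-1}\setminus\{\pi\}}$ and $\min_{\Pi_{k-1}}$ are logically equivalent --- dropping the conjunct $G\preceq\pi(G)$ can only weaken the constraint, and the test furnishes the reverse implication --- and equivalent symmetry breaks select the same solutions of $\varphi$, so canonicity is transferred from $\Pi_{k-1}$ to $\Pi_k$. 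I expect no genuine obstacle beyond this bookkeeping; the argument is essentially that each accepted removal preserves the solution-level meaning of the constraint.
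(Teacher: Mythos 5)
Your proposal is correct and follows essentially the same route as the paper's proof: an induction over the iterations of the \textsc{For} loop showing that each accepted removal leaves $\min_{\Pi_{k}}$ equivalent to $\min_{\Pi_{k-1}}$ on $sol(\varphi)$ (the test supplies the nontrivial direction $\min_{\Pi_{k-1}\setminus\{\pi\}}(G)\Rightarrow G\preceq\pi(G)$, the other being monotonicity of the conjunction), so canonicity is transferred step by step. Your remark that the test must be read against the \emph{current} working set rather than the original input is exactly the point the paper's invariant formulation is handling, so there is nothing further to add.
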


\begin{proof}
Let $\Pi_{i}$ be the set obtained after considering the $i^{th}$ permutation
in Line~\ref{reduce:select} of Algorithm~\ref{ReduceAlgorithm}. The
initial set $\Pi_0$ is the input to the algorithm.
We prove that $\min_{\Pi_{i}} \leftrightarrow \min_{\Pi_{i+1}}$ 
and conclude that $\min_{\Pi} \leftrightarrow \min_{Reduce(\Pi)}$.
If no permutation was removed in step $i$ then $\Pi_{i+1} = \Pi_{i}$
and trivially $\min_{\Pi_{i}} \leftrightarrow \min_{\Pi_{i+1}}$.
Otherwise $\Pi_{i+1} = \Pi_{i} \setminus \{ \pi \}$ for a permutation
$\pi$ which satisfies $\forall G \in sol(\varphi)$:
$\min_{\Pi_{i+1}}(G)$ $\Rightarrow$ $G \preceq \pi(G)$.  Thus $\pi$ is
implied by the permutations in $\Pi_{i}$ and can be removed. Therefore
$\min_{\Pi_{i+1}}(G) \leftrightarrow \min_{\Pi_{i}}(G)$.
\end{proof}
Our implementation of Algorithm~\ref{ReduceAlgorithm} is based on a SAT
encoding.  The key is in the encoding for the test in
Line~\ref{reduce:test}.  Here, for the given $\Pi$ and $\pi\in\Pi$, we
encode the constraint
\begin{equation}\label{constraint:reduce_step}
 {alg_2}^n(\Pi,\varphi) = \underbrace{adj^{n}(A)}_{(a)}
   ~\land~ 
   \underbrace{\varphi(A) ~\land~ min_{\Pi \setminus \{ \pi \}}(A) ~\land~ \pi(A) \succ A}_{(b)}
\end{equation}
where the left part (a) specifies that $A$ is the $n\times n$
adjacency matrix of some graph (see
Constraint~(\ref{constraint:simple})), and the right part (b) is the
negation of the condition in Line~\ref{reduce:test}.
If this constraint is unsatisfiable then $\pi$ is redundant and
removed from~$\Pi$.  

\subsection{Instance Independent Symmetry breaks}
\label{subsec:indep}

Observe that if $\varphi = \true$ then $sol(\varphi)=\GG_n$. Applying
Algorithm~\ref{CanSetAlgorithm} to compute using
\texttt{{Compute-Canonizing-Set($\Pi$, $\true$)}} generates canonizing
symmetry breaks which apply for any graph search problem on n vertices
(i.e instance independent). This is true for any set of permutations
$\Pi$ but for simplicity assume $\Pi=\emptyset$.
%
%

Table~\ref{canSetTable} describes the computation of irredundant
instance independent canonizing permutation sets for $n\leq 10$ by
application of Algorithms~\ref{CanSetAlgorithm}
and~\ref{ReduceAlgorithm}.
The corresponding permutation sets can be obtained from
\url{http://www.cs.bgu.ac.il/~mcodish/Papers/Tools/canonizingSets}.
The first 3 columns indicate the number of graph vertices, $n$,
the number of permutations on $n$, and the number of non-isomorphic
graphs on $n$ vertices as specified by sequence \texttt{A000088} of
the OEIS~\cite{oeis}.
The forth and fifth columns indicate the size of the canonical set of
permutations computed using Algorithm~\ref{CanSetAlgorithm} and the
time to perform this computation.
Columns six and seven are the size of the reduced canonical set of
permutations after application of Algorithm~\ref{ReduceAlgorithm} and
the corresponding computation time.
Column seven is set in boldface. These numbers present the relatively
small size of the computed canonizing sets in comparison to the value
of $n!$. Using the symmetry breaks derived from these sets we have
generated the sets of all non-isomorphic graphs with up to 10 vertices
and verified that their numbers correspond to those in column three.
These are computed by solving the conjunction of
Constraint~(\ref{constraint:simple}) with the corresponding symmetry
breaking predicate $\min_\Pi$ the computation of which is described in
Table~\ref{canSetTable}.

The numbers in Table~\ref{canSetTable} also indicate the limitation of
complete symmetry breaks which apply to all graphs. We do not expect to succeed
to compute a canonizing set of permutations for $n=11$ and
even if we did succeed, we expect that the number of constraints that would
then need be added in applications would be too large to be effective.

\begin{table}[h]
\centering
\caption{Computing irredundant canonizing sets of permutations for $n \leq 10$.}
\label{canSetTable}
\begin{tabular}{|r|r|r|r|r|r|>{\bfseries}r|}
\hline 
\multicolumn{1}{|c|}{}  &  \multicolumn{1}{c|}{}  &   \multicolumn{1}{c|}{} &
\multicolumn{2}{c|}{Algorithm 1}  & \multicolumn{2}{c|}{Algorithm 2} \\ \hline
$n$ & $n!$ & can. graphs & \multicolumn{1}{c|}{time (sec.)} & \multicolumn{1}{c|}{can. set} & \multicolumn{1}{c|}{time (sec.)} & \multicolumn{1}{c|}{red. set} \\ \hline
 3  &           6 &            4 & 0.02      &   3        &   0.01   &    2   \\ 
 4  &          24 &           11 & 0.02      &   7        &   0.01   &    3   \\ 
 5  &         120 &           34 & 0.05      &  27        &   0.05   &    7   \\ 
 6  &         720 &          156 & 0.35      &  79        &   1.27   &   13   \\ 
 7  &      5\,040 &       1\,044 & 1.92      & 223        &  11.27   &   37   \\ 
 8  &     40\,320 &      12\,346 & 27.61     & 713        &  317.76  &  135   \\ 
 9  &    362\,880 &     274\,668 & 1\,108.13 & 4\,125     &  7\,623.20 &  842   \\ 
10  & 3\,628\,800 & 12\,005\,168 &  9.82 hr. & 20\,730    &  84 hr.  & 7\,853   \\ \hline
\end{tabular}
\end{table}

\subsection*{Computing Ramsey Graphs with Canonizing Symmetry Breaks}

Recall Example~\ref{ex:ramsey} where we introduce the  graph search
problem for Ramsey graphs.
Table~\ref{ramseyUpTo10} describes the computation of all $R(4,4;n)$
graphs for $n\leq 10$ using a SAT solver. The table compares two
configurations: one using the partial symmetry breaking predicate
$\SB^*_\ell$ defined in \cite{DBLP:conf/ijcai/CodishMPS13} and the
other using a canonizing symmetry break $\min_\Pi$ where $\Pi$ is the
canonizing set of permutations, the computation of which is described
in Table~\ref{canSetTable}.
For each configuration we detail the size of the SAT encoding (clauses
and variables), the time in seconds (except where indicated in hours)
to find all solutions using a SAT solver, and the number of solutions
found. Observe that the encodings using the canonizing symmetry breaks
are much larger. However the sat solving time is much smaller. For
$n=10$ the configuration with $\SB^*_\ell$ requires more than 33 hours
where as the configuration using $\min_\Pi$ completes in under 7
hours. 
Finally note that the computation with $\min_\Pi$ computes the
precise number of solutions modulo graph isomorphism as detailed for
example in~\cite{MR95}. These are the numbers in the rightmost column
set in boldface. The solutions computed using $\SB^*_\ell$ contain
many isomorphic solutions which need to be subsequently removed using
\texttt{nauty} or a similar tool. 
One might argue that the real cost in applying the complete symmetry
breaks should include their computation. To this end we note that
these are general symmetry breaking predicates applicable to any graph
search problem, and they are precomputed once.

\begin{table}
\centering
\caption{Computing $| \RR(4,4;n) |$ with canonizing symmetry breaking and $\SB^*_\ell$.}
\label{ramseyUpTo10}
\begin{tabular}{|@{~}r@{~}|r@{~}|r@{~}|r@{~}|r@{~}|r@{~}|r@{~}|r@{~}|>{\bfseries}r@{~}|}
\hline
  & 
\multicolumn{4}{c|}{partial sym. break $\SB^*_\ell$}  
                                         &
                                         \multicolumn{4}{c|}{canonizing
                                         sym. break} \\ \hline
             $n$ & \multicolumn{1}{c|}{clauses}
              & \multicolumn{1}{c|}{vars} 
              & \multicolumn{1}{c|}{sat (sec.)}
              & \multicolumn{1}{c|}{sols} 
              & \multicolumn{1}{c|}{clauses}
              & \multicolumn{1}{c|}{vars} 
              & \multicolumn{1}{c|}{sat (sec.)}
              & \multicolumn{1}{c|}{sols} \\ \hline
$4$  & 22   & 10  & 0.01    &  9        &  17        &  5        & 0.00   & 9      \\  
$5$  & 80   & 24  & 0.01    & 33        & 235        & 55        & 0.01   & 24          \\ 
$6$  & 195  & 48  & 0.02    & 178       & 315        & 72        & 0.01   & 84          \\ 
$7$  & 390  & 85  & 0.12    & 1\,478      & 1\,395       & 286       & 0.05   & 362         \\ 
$8$  & 690  & 138 & 4.91   & 16\,919  & 10\,885   & 2\,177      & 1.69    & 2\,079        \\ 
$9$  & 1\,122 & 210 & 745.72 & 227\,648 & 89\,877   & 17\,961  & 144.4   & 14\,701    \\ 
$10$ &1\,715 &304 &33.65 {hr.}& 2\,891\,024 & 1\,406\,100 & 281\,181 & 6.56 {hr.} &103\,706   \\ \hline

\end{tabular}
\end{table}


\subsection*{Computing Claw-Free Graphs with Canonizing Symmetry Breaks}

Recall Example~\ref{ex:claw} where we introduce the  graph search
problem for claw-free graphs.
The number of claw-free graphs for $n \leq 9$ vertices is detailed as
sequence \texttt{A086991} on the
OEIS~\cite{oeis}. Table~\ref{clawUpTo10} describes the search for claw
free graphs as a graph search problem. Then we use a SAT solver to
compute the set of all claw free graphs on $n$ vertices. We illustrate
that using canonizing symmetry breaks, and the results detailed in
Table~\ref{canSetTable}, we can compute the set of all claw free
graphs modulo graph isomorphism for $n\leq 10$ thus computing a new
value for sequence \texttt{A086991}. We comment that after this value
for $n=10$ was added to the OEIS, Falk H{\"{u}}ffner added further
values for $10<n\leq 15$. 
The column descriptions are the same as those for
Table~\ref{ramseyUpTo10}. 
For each configuration we detail the size of the SAT encoding (clauses
and variables), the time in seconds to find all solutions using a SAT
solver, and the number of solutions found. For this example the
computation with a complete symmetry break is more costly, but it does
return the precise set of canonical graphs.
The sequence in the right column are set in boldface. For $n\leq 9$,
these are the numbers of claw-free graphs as detailed in sequence
\texttt{A086991} of the OEIS~\cite{oeis}.
It is no coincidence that the number of variables indicated in the
columns of Tables~\ref{ramseyUpTo10} and~\ref{clawUpTo10} are almost
identical. These pertain to the variables in the adjacency graph and
those introduced to express the instance independent symmetry breaks.

\begin{table}
\centering
\caption{Computing claw-free graphs with canonizing symmetry breaking
  and $\SB^*_\ell$.} 
\label{clawUpTo10}
\begin{tabular}{|@{~}r@{~}|r@{~}|r@{~}|r@{~}|r@{~}|r@{~}|r@{~}|r@{~}|>{\bfseries}r@{~}|}
\hline
  & \multicolumn{4}{c|}{partial sym. break $\SB^*_\ell$}  & \multicolumn{4}{c|}{canonizing sym. break} \\ \hline 
              $n$ & \multicolumn{1}{c|}{clauses}
              & \multicolumn{1}{c|}{vars} 
              & \multicolumn{1}{c|}{sat (sec.)}
              & \multicolumn{1}{c|}{sols} 
              & \multicolumn{1}{c|}{clauses}
              & \multicolumn{1}{c|}{vars} 
              & \multicolumn{1}{c|}{sat (sec.)}
              & \multicolumn{1}{c|}{sols} \\ \hline
4   & 24    & 10   & 0.01    & 10        & 19       & 9        & 0.01  & 10       \\ 
5   & 90    & 24   & 0.01    & 32        & 245      & 55       & 0.01  & 26       \\ 
6   & 225   & 48   & 0.01    & 143       & 345      & 72       & 0.01  & 85       \\ 
7   & 460   & 85   & 0.04    & 819       & 1\,465     & 286      & 0.03      & 302      \\ 
8   & 830   & 138  & 0.86    & 5\,559      & 11\,025    & 2\,177     & 1.08      & 1\,285     \\ 
9   & 1\,374  & 210  & 28.72   & 42\,570     & 90\,129    & 17\,961    & 75.23     & 6\,170     \\ 
10  & 2\,135  & 304  & 2\,352.37 & 368\,998    & 1\,406\,520  & 281\,181   & 8797.23   & 34\,294 \\ \hline

\end{tabular}

\end{table}


\subsection{Instance Dependent Canonizing Symmetry Breaks}
\label{sec:dep_canonizing}

A canonizing set for a specific graph search problem $\varphi$ is
typically much smaller than a general canonizing set as the
constraints in $\varphi$ restrict the solution structure and hence
also the symmetries within the solution space.  We call such a set
\emph{instance dependent}.
In practice we can often compute instance dependent canonizing sets
for larger graphs with $n > 10$ vertices. 
For a given graph search problem $\varphi$, let us denote by
$\Pi_{\varphi}$ the canonizing set of permutations obtained from
\texttt{{Compute-Canonizing-Set($\emptyset$, $\varphi$)}} of
Algorithm~\ref{CanSetAlgorithm}.
Solutions of $\varphi$ obtained with the induced symmetry break
predicate $\min_{\Pi_\varphi}$ are guaranteed to be pairwise
non-isomorphic.

In this section we demonstrate the application of instance dependent canonizing sets.
Here we consider a search problem for which we seek a graph that has
a particular given degree sequence.
%
%

%
%
%
%
%
A degree sequence is a monotonic non-increasing sequence of the vertex
degrees of a graph. Degree sequences are a natural way to break a graph
search problem into independent cases (one for each possible degree
sequence). Thus the search for a solution or all solutions can be done
in parallel.

Since a degree sequence induces a partition on the vertex set, in
order to compute an instance dependent canonizing symmetry break with
respect to a degree sequence, a constraint stating that $B$ has the
same degree sequence as $A$ needs to be added to
(\ref{constraint:step}$^\prime$). The following specifies that an
adjacency matrix complies to a given degree sequence. Here each
conjunct is a cardinality constraint on a row of $A$.
\begin{eqnarray}
  \varphi^{\tuple{d_1,\ldots,d_n}}_{degSeq}(A) &=& 
         \bigwedge_{1\leq i\leq n} \left(\sum_{j=1}^n A_{i,j} = d_i\right)
\end{eqnarray}

\subsection*{Computing Highly Irregular Graphs Per Degree Sequence}
 
A connected graph is called highly irregular if each of its vertices
is adjacent only to vertices with distinct
degrees~\cite{alavi1987highly}. The number of highly irregular graphs
with $n\leq 15$ vertices is detailed as sequence \texttt{A217246} in
the OEIS~\cite{oeis}. By application of instance dependent canonizing
symmetry breaks we extend this sequence with 4 new values.
The following constraint specifies that the graph represented by
adjacency matrix $A$ with degree sequence $\tuple{d_1,\ldots,d_n}$ is
highly irregular.
\begin{equation}
  \varphi_{hi}^{\tuple{d_1,\ldots,d_n}}(A) =
  \hspace{-5mm}\bigwedge_{1\leq i,j<k\leq n ~s.t~  d_j = d_k}\hspace{-4mm}
                 (\neg A_{i,j} \vee \neg A_{i,k} )
  \wedge              
  \varphi^{\tuple{d_1,\ldots,d_n}}_{degSeq}(A)
  \wedge               
  \varphi_{con}^{n}(A)
     \label{constraint:hig}
\end{equation}  
Here, the formula $\varphi_{con}^{n}(A)$ specifies that the graph
represented by adjacency matrix $A$ is connected.  The following
constraint introduces propositional variables $p^k_{i,j}$ to express
that vertices $i$ and $j$ are connected by a path that consists of
intermediate vertices from the set $\{1,\ldots,k\}$.
\begin{eqnarray}
    \varphi_{con}^{n}(A) &=&
        \hspace{-2mm} \bigwedge_{1\leq i,j\leq n} \hspace{-2mm}
                   (p_{i,j}^{0} \leftrightarrow A_{i,j})  ~\land~ 
  \hspace{-4mm} \bigwedge_{1\leq i,j,k\leq n} \hspace{-4mm}
  p_{i,j}^{k} \leftrightarrow (p_{i,j}^{k-1} \vee (p_{i,k}^{k-1} \land
  p_{k,j}^{k-1})) ~\wedge 
\nonumber
\\
  & & ~\land~ \hspace{-3mm} \bigwedge_{1\leq i,j\leq n} \hspace{-2mm}(p_{i,j}^{n})
  \label{constraint:connected}
\end{eqnarray}

Our strategy is to compute all highly irregular graphs with $n$
vertices in three steps: (1) We compute the set of degree sequences
for all highly irregular graphs with $n$ vertices; (2) For each degree
sequence we compute an instance dependent canonizing symmetry break;
(3) We apply per degree sequence, the instance dependent canonizing
symmetry break to compute the corresponding set of graphs with the
corresponding degree sequence.

To perform the first step we apply a result from
\cite{majcher1997degree} which states that any degree sequence of a
highly irregular graph is of the form
$\tuple{m^{n_m},\ldots,i^{n_i},\ldots,1^{n_1}}$  where:
\begin{enumerate}
\item 
 $n_i \geq n_m$ for $1\leq i \leq m$; and
 \item 
 $\sum_{i=1}^{m} (n_{i}*i)$ and $n_m$ are positive even numbers.
\end{enumerate}
It is straightforward to enumerate all degree sequences for graphs with
up to 20 vertices that satisfy this property.  We then apply a SAT
solver to determine which of these sequences is the degree sequence
of some highly irregular graph. 
Step (2) is performed using a SAT solver, per degree sequence, by
application of the above described adaptation of
Algorithm~\ref{CanSetAlgorithm} to compute an irredundant instance
dependent canonizing set with respect to
$\varphi_{hi}^{\tuple{d_1,\ldots,d_n}}(A)$.
In step (3) we enumerate all non-isomorphic highly irregular graphs
per degree sequence with respect to the corresponding canonizing
symmetry breaking constraints. We compute the graphs with a simple
backtrack based (exhaustive search) program written in Java in which
the variables of the adjacency matrix are assigned one by one until a
solution is found.

Table \ref{compute-hig} presents our results. The columns are divided
into three pairs corresponding to the three steps described above: the
first pair -- computing degree sequences, the second pair --
computing (instance dependent) canonizing permutation sets, and the
third pair -- computing solutions (using the derived canonizing
symmetry breaks). Each pair presents the computation size and
information on the solutions. For the first pair, the number of degree
sequences. For the second pair, the average number of permutations in
the canonizing permutation sets. In the third pair, the number of
connected highly irregular graphs with $n$ vertices (set in
boldface). The values for $n\leq 15$ vertices correspond to those
detailed as sequence \texttt{A217246} in the OEIS~\cite{oeis}. The
values for $n>15$ are new.

When computing solutions, as detailed in the rightmost pair of columns
of Table~\ref{compute-hig}, computation is performed in parallel,
using a separate thread of the cluster for each degree sequence found in
the first step. So for example, when $n=20$, there are 151 parallel
threads running with a total time of 7190.23 hours. This implies an
average of about 47 hours.
%
Note that we succeed to compute canonizing symmetry breaks for more
than 20 nodes. We have not included the results in
Table~\ref{compute-hig} as the subsequent graph enumeration problems
involve a humongous number of graphs.

\begin{table}\centering
  \caption{Computing highly irregular graphs per degree sequence (time in 
            seconds unless otherwise indicated).}
\label{compute-hig}
\begin{tabular}{|r|r|r|r|r|r|>{\bfseries}r|}\hline
     & \multicolumn{2}{c|}{deg.seqs}  
     & \multicolumn{2}{c|}{can. sets} 
     & \multicolumn{2}{c|}{solutions} \\ 
\hline 
      $n$ & \multicolumn{1}{c|}{time} &  deg.seqs     
      & \multicolumn{1}{c|}{time} &  perms (avg.) 
      & \multicolumn{1}{c|}{time}      &  sols (total) \\ 
\hline
11  &    1.11           &  2      &  3.28     &     6.5    &    0.29 & 21                 \\
12  &    4.47           &  7      &  15.16    &     7.57   &    0.87 & 110                \\
13  &    5.73           &  7      &  28.65    &    9.71    &    1.42 & 474               \\ 
14  &    17.85          & 16      &  93.69    &    10.56   &    5.62 & 2\,545              \\ 
15  &    27.15          & 17      &  183.49   &    13.11   &   28.39 & 18\,696          \\ 
16  &    59.69          & 33      &  487.85   &    13.57   &  234.97 & 136\,749         \\ 
17  &    111.97         & 38      &  683.14   &    13.94   & 3\,312.04 & 1\,447\,003     \\ 
18  &    237.53         & 68      &  1\,797.16  &    14.89 &  14.17 hr.& 18\,772\,435    \\ 
19  &    468.99         & 92      &  3\,281.07  &    16.07 & 263.90 hr.  & 303\,050\,079   \\ 
20  &    881.53         & 151     &  8\,450.91    &    16.73 &  7190.23 hr. &  6\,239\,596\,472\\ 

 \hline
\end{tabular}
\end{table}

\section{A Generalization to Matrix Models}
\label{sec:matrixmodels}

Graph search problems, as considered in this paper, are a special case
of matrix search problems expressed in terms of a matrix of finite
domain decision variables
\cite{flener2002,frisch2003,yip2011,Walsh12}.  Often, in such
problems, both rows and columns can be permuted, possibly by different
permutations, while preserving solutions. Matrix search problems with
this property are called
``\emph{fully-interchangeable}''~\cite{yip2011}.  A graph search
problem can be seen as a fully-interchangeable matrix search problem
where the variables are Boolean, the matrix is symmetric and has
$\false$ values on its diagonal, and symmetries involve permuting rows
and columns, but only by the same permutation for both.

Similar to Definition~\ref{def:order}, it is common to define a
lex-order on matrix models. For matrices $M_1$ and $M_2$ (of the same
dimension) with $s_1$ and $s_2$ the strings obtained by concatenating
their rows, $M_1\preceq M_2$ if and only if $s_1\preceq_{lex}
s_2$. Similar to Definition~\ref{def:canonizing}, the \textsc{LexLeader}
method~\cite{Crawford96} can be applied to a fully-interchangeable
matrix search problem to guarantee canonical solutions which are
minimal in the lex ordering of matrices. For an $n\times m$ matrix
search problem this involves potentially considering all $n!\times m!$
pairs of permutations (per $n$ rows and per $m$ columns). This is not
practical as it means introducing $n!\times m!$ lex constraints on
strings of size $n\times m$.

To this end, the \textsc{DoubleLex} constraint was introduced in
\cite{flener2002} to lexicographically order (linearly) both rows and
columns.  The \textsc{DoubleLex} constraint can be viewed as derived
by a subset of  the constraints imposed in the \textsc{LexLeader}
method~\cite{Walsh12}. For a matrix with $n$ rows and $m$ columns this
boils down to a total of only $(n-1)+(m-1)$  permutations.
The \textsc{DoubleLex} constraint has been shown to be very effective
at eliminating much of the symmetry in a range of
fully-interchangeable matrix search problem. Still, it does not break
all of the symmetries broken by \textsc{LexLeader}.

We compare the \textsc{DoubleLex} symmetry break with canonical
symmetry breaking for the  application to {EFPA} (Equi-distant
Frequency Permutation Array).
An instance of the EFPA problem takes the form $(q,\lambda,d,v)$. The
goal is to find a set of $v$ words of length $q\lambda$ such that each
word contains $\lambda$ copies of the symbols 1 to $q$, and each pair
of words is Hamming distance $d$ apart. The problem is naturally
expressed as a $v\times q\lambda$ (fully-interchangeable) matrix
search problem.

Table~1 in the survey by Walsh \cite{Walsh12} illustrates the
power of the \textsc{DoubleLex} symmetry break. The table details the
number of solutions found with \textsc{DoubleLex} constraint for
several instances of the EFPA problem in contrast to the total number
of non-symmetric solutions. It demonstrates that \textsc{DoubleLex}
leaves very few redundant solutions.

We have adapted Algorithms~\ref{CanSetAlgorithm}
and~\ref{ReduceAlgorithm} so that they apply to search for pairs of
permutations which induce constraints to break all symmetries in
fully-interchangeable matrix search problems. With these constraints
we obtain only the canonical solutions. We have applied such
constraints to the instances of the EFPA problem considered
in~\cite{Walsh12}.
For matrix search problems we initialize
Algorithm~\ref{CanSetAlgorithm} taking $\Pi$ to include the
permutation pairs corresponding to the \textsc{DoubleLex} symmetry break.

Table~\ref{EFPA}  summarizes our results obtained, as
all results in this paper, using the finite-domain constraint compiler
\bee~\cite{jair2013} with the underlying Glucose 4.0 SAT
solver~\cite{Glucose}.
On the left side the table details statistics for solutions with
\textsc{DoubleLex}: the number of permutation pairs introduced by
\textsc{DoubleLex}, the solving time (in seconds) and the number of
solutions found. The right side of the table details the same for the
canonizing symmetry breaks. Except that here the number of permutation
pairs are for the canonizing symmetry breaks, as discovered using
Algorithms~\ref{CanSetAlgorithm} and~\ref{ReduceAlgorithm}. Here we
also make explicit the number of additional permutations $\Delta$, in
addition to those introduced by \textsc{DoubleLex}, required to
provide a canonizing symmetry break. 
In several rows of the table, corresponding to instances where
\textsc{DoubleLex} is in fact complete, this value is negative.  In
these cases no permutations were added by
Algorithm~\ref{CanSetAlgorithm} and several were removed by
Algorithm~\ref{ReduceAlgorithm} when deriving the corresponding
canonizing symmetry break. It is interesting to note that, often
times, for canonical symmetry breaks, only a few permutations need be
added on top of these already introduced by \textsc{DoubleLex}.
The numbers in the rightmost columns (set in boldface) correspond to
the number of distinct non-symmetric solutions. 
The corresponding sets of permutation pairs for
the instances in Table~\ref{EFPA} can be obtained from
\url{http://www.cs.bgu.ac.il/~mcodish/Papers/Tools/canonizingSets}.

\begin{table}
\centering
\caption{Number of solutions for EFPA with 
    \textsc{DoubleLex} and canonizing symmetry breaks.}
\label{EFPA}
\begin{tabular}{|r|r|r|r|rr|r|>{\bfseries}r|}
\hline
 & 
\multicolumn{3}{c|}{\textsc{DoubleLex} sym. break}  
                                         &
                                         \multicolumn{4}{c|}{canonizing
                                         sym. break} \\ \hline
              $(q,\lambda,d,v)$  & \multicolumn{1}{r|}{perms} 
              & \multicolumn{1}{r|}{sat (sec.)}
              & \multicolumn{1}{r|}{sols} 
              & \multicolumn{2}{r|}{perms ($\Delta$)} 
              & \multicolumn{1}{r|}{sat (sec.)}
              & \multicolumn{1}{r|}{sols} \\ \hline    
$(3,3,2,3)$      & 10   & 0.01       &  6         &  8   & (-2)  & 0.01   &  6    \\ 
$(4,3,3,3)$      & 13   &  0.06      &  16        &  16  & (3)   & 0.06   & 8     \\
$(4,4,2,3)$      & 17   &  0.05      &   12       &  15  & (-2)  & 0.03   &  12   \\ 
$(3,4,6,4)$      & 14   &  19.12     &  11\,215   &  328 & (314) & 14.10  & 1\,427  \\
$(4,3,5,4)$      & 14   &  145.22    &  61\,258   &  1537& (1523)& 414.56 & 8\,600  \\ 
$(4,4,5,4)$      & 18   &  280.33    &  72\,251   &  1793& (1775)& 748.77 & 9\,696  \\ 
$(5,3,3,4)$      & 17   &  0.29      &  20        &  27  & (10)  & 0.22   & 5     \\ 
$(3,3,4,5)$      & 12   &  0.36      &  71        &  36  & (24)  & 0.45   & 18    \\
$(3,4,6,5)$      & 15   &  611.88    &  77\,535   &  988 & (973) & 195.17 & 4\,978  \\ 
$(4,3,4,5)$      & 15   &  11.34     &  2\,694    &  245 & (230) & 9.51   & 441   \\ 
$(4,4,2,5)$      & 19   &  0.10      &  12        &  15  & (-4)  & 0.11   &  12   \\ 
$(4,4,4,5)$      & 19   &  22.42     &  4\,604    &  385 & (366) & 25.41  & 717   \\
$(4,6,4,5)$      & 27   &  46.88     &  5\,048    &  441 & (414) & 75.81  & 819   \\
$(5,3,4,5)$      & 18   &  157.94    &  20\,831   &  898 & (880) & 262.02 & 3\,067  \\
$(6,3,4,5)$      & 21   &  1230.19   &  111\,082  &  2348& (2327)& 3537.54& 15\,192     \\ \hline
\end{tabular}
\end{table}
%

                                          
               

\section{Related Work}

Isomorphism free generation of combinatorial objects and particularly
graphs, is a well studied topic
\cite{Read78,Faradzev1978,read1981survey,brinkmann1996fast,mckay1998isomorph}. Methods
that generate the canonical representatives of each equivalence class
are sometimes classified as ``orderly'' generation methods. This is a
dynamic approach. Typically graphs are constructed by adding edges in
iteration until a solution is found and backtracking when failing. In
each such iteration the graph is checked to determine whether it can
still be further extended: (a) to a solution of the graph search
problem, and (b) to a canonical graph. Both of these tests consider
only the fixed part of the partial graph.  These techniques do not
restrict the set of permutations to be canonical but rather consider
all permutations relevant to the partially instantiated
structure. Still, initially, there are very few permutations that need
to be considered for (b) as the partial graph is still small. However,
as the partial graph becomes more instantiated this test becomes
harder and consumes more time. This approach is also the one applied
in \cite{yip2011} for matrix search problems.
In contrast our method is static. We aim to compute, before applying
search, a small set of permutations that apply to break the symmetries
in solutions. Our approach does not rely on which parts of the graph
have already been determined during search.

\section{Conclusion and Future Work}
\label{sec:conclusion}

We have illustrated the applicability of canonizing symmetry breaking
constraints for small graph and matrix search problems. Although any
row/column permutation is potentially a symmetry, we compute compact
canonizing symmetry breaks, much smaller than those which consider all
permutations.  Our strategy is two phase. First, symmetry breaking
constraints are computed. Second, these constraints are added to the
model and then any solver can be applied to find (all) solutions which
satisfy the model.

For graph search problems, we have presented methods that generate
both instance independent and instance dependent symmetry breaking
constraints.  While instance dependent symmetry breaks have limited
applicability since they grow enormously for graphs with more than 10
vertices, instance dependent symmetry breaks have been successfully
applied to compute new values in highly irregular graphs OEIS sequence
for graphs with up to 20 vertices. For matrix search problems our
focus is on instance dependent constraints. 

Although, our approach is applicable only to graphs with
small numbers of vertices, there are many open small graph search
problems. For example the set of all Ramsey $R(4,5;24)$ graphs has not
been determined yet. We are currently trying to extend our techniques
to apply to compute symmetry breaks for this problem which involves
only 24 vertices.

Finally, we note that our approach can also apply to improve dynamic
symmetry breaking techniques. Given a partially instantiated graph, to
determine if it is extendable to a canonical graph, one need not
consider all of the permutations related to the already instantiated
part. This is because some of those permutations are redundant.

\subsection*{Acknowledgments}
We thank the anonymous reviewers of an earlier version of this paper
for their constructive suggestions. In particular the addition of
Section~\ref{sec:matrixmodels} is in view of the comments of the
reviewers.

\bibliographystyle{spmpsci}      

\end{document}